%% file: root.tex
\documentclass[letterpaper, 10 pt, conference]{ieeeconf}  

\IEEEoverridecommandlockouts                              

\usepackage{placeins}
\usepackage{graphics} 
\usepackage{epsfig} 
\usepackage{mathptmx} 
\usepackage{times} 
\usepackage{amsmath} 
\usepackage{amssymb}  
\usepackage{algorithm,algpseudocode}
\usepackage{graphicx}
\usepackage{subcaption}
\usepackage{adjustbox}
\usepackage{color}
\usepackage{url}
\usepackage{tikz}
\usepackage{graphicx}
\usepackage{svg}
\usepackage{booktabs}
\usepackage{mathtools}
\usepackage{hyperref}

\usetikzlibrary{shapes.geometric, arrows.meta, positioning, calc, shadows, decorations.pathreplacing, fit}

\input{my_commands}

\graphicspath{{figures/}}
\title{\LARGE \bf
Learning-Based Robust Control: Unifying Exploration and Distributional Robustness for Reliable Robotics via Free Energy
}

\author{Hozefa Jesawada$^{*1}$, 
        Giovanni Russo$^{2}$, 
        Abdalla Swikir$^{3}$, 
        and Fares Abu-Dakka$^{1}$
\thanks{$^{*}$ Corresponding: {\tt\footnotesize hj3000@nyu.edu}. \textsuperscript{1}Mechanical Engineering Program, New York University Abu Dhabi, Abu Dhabi, UAE. \textsuperscript{2}Department of Information and Electrical Engineering \& Applied Mathematics, University of
Salerno, Fisciano, Italy. \textsuperscript{3}Mohammed bin Zayed University for Artificial Intelligence, Abu Dhabi, UAE. Work not supported by any organization}%
}

\begin{document}

\maketitle
\thispagestyle{empty}
\pagestyle{empty}

\begin{abstract}
A key challenge towards reliable robotic control is devising computational models that can both learn policies and guarantee robustness when deployed in the field. Inspired by the free energy principle in computational neuroscience, to address these challenges, we propose a model for policy computation that jointly learns environment dynamics and rewards, while ensuring robustness to epistemic uncertainties. Expounding a distributionally robust free energy principle, we propose a modification to the maximum diffusion learning framework. After explicitly characterizing robustness of our policies to epistemic uncertainties in both environment and reward, we validate their effectiveness on continuous-control benchmarks, via both simulations and real-world experiments involving manipulation with a Franka Research~3 arm. Across simulation and zero-shot deployment, our approach narrows the sim-to-real gap, and enables repeatable tabletop manipulation without task-specific fine-tuning.
\end{abstract}

\section{Introduction}
A popular paradigm to design autonomous robots has become that of learning control policies from simulations \cite{kober2013reinforcement}. Yet, even with high-fidelity simulators, these policies can fail when deployed in real-world conditions that deviate, even slightly, from those seen during training \cite{Kejriwal2024}. Such mismatches can arise in, e.g., contact-rich tasks, or in the presence of sensing/actuation noise, or even when some dynamics -- such as nonlinear frictions, delays, calibration drifts \cite{thrun2002probabilistic} -- is not perfectly modeled. In these settings, small discrepancies can cause large failures, with potentially catastrophic consequences for the robot and its environment. 

In this context, the minimization of the free energy -- and its instantiation into active inference robotics architectures -- has emerged as a promising approach bridging machine learning, robotics and neuroscience to design inference, learning and control algorithms for effective robot control \cite{friston2009free,Prescott2023,Vijayaraghavan2025,lanillos2021activeinferenceroboticsartificial, Baioumy2021,10160593,pezzato2020novel}. Yet, despite the success of free energy based policies, there is currently no computational model that, simultaneously, learns the policy and provides explicit robustness guarantees against epistemic uncertainties in the environment model learned by the agent and the task reward. Motivated by this, we present a computational model that: (i) without having available a model of the environment and reward function, learns a policy that minimizes the (variational) free energy; (ii) provides a-priori explicit robustness guarantees of the policy that can be used as {\em certificates} for robot deployment in the field. 

\begin{figure}[htbp]
    \centering
    \includegraphics[width=\columnwidth]{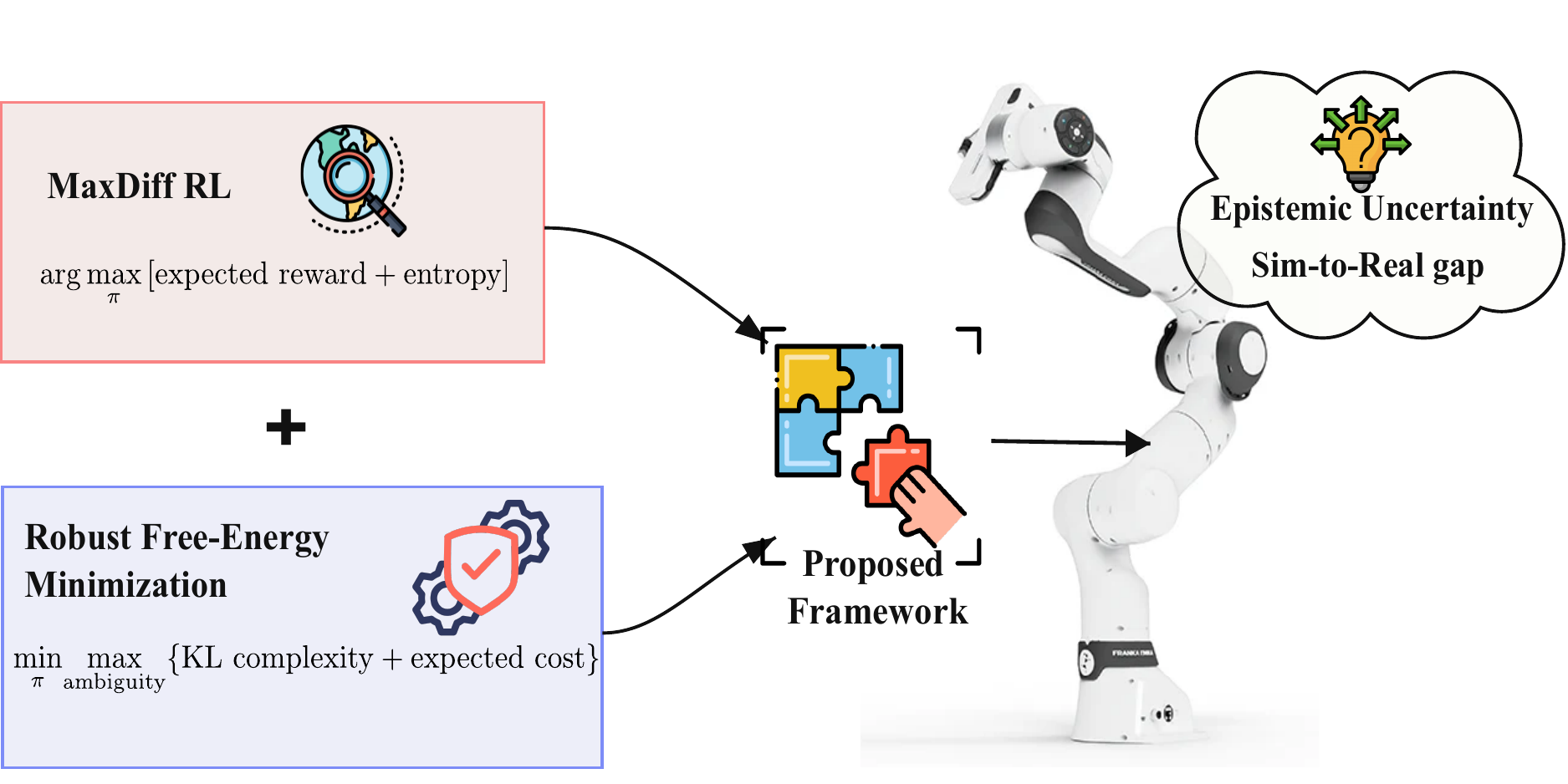}
    \caption{Modifying Maximum Diffusion RL with Distributionally Robust Free Energy Principle to address epistemic uncertainty, narrowing sim-to-real gaps for reliable robotics.}
    \label{fig:graphical-abstract}\vspace{-0.5cm}
\end{figure}

To achieve this, expounding a model-based distributional free energy principle (DR-FREE) for policy computation  \cite{shafiei2025distributionally}, we propose a modification to a recent learning framework \cite{berrueta2024maximum}, Maximum Diffusion (MaxDiff). MaxDiff, which generalizes maximum entropy methods -- see, e.g., \cite{eysenbach2021maximum,hazan2019provably,haarnoja2018soft} --  and exceeds state-of-the-art performance on popular robotics benchmarks, learns policies with maximally diffusive path statistics without having access to the environment model and reward function. However, despite its impressive control performance, MaxDiff robustness is only implicit: it depends on the entropy of the optimal policy, not known a priori \cite{eysenbach2021maximum}, and explicit bounds are only available for discrete action spaces. In contrast, DR-FREE provides explicit distributional robustness guarantees in free energy minimization problems, but requires access to both the dynamics model and the reward. Remarkably, as we recall in Section \ref{sec:background}, the MaxDiff optimization can be recast as a free energy minimization problem, enabling us to integrate DR-FREE into MaxDiff and obtain explicit robustness guarantees without sacrificing MaxDiff’s desirable features. The effectiveness of our method is illustrated via continuous control benchmarks. Numerical and hardware experiments -- involving manipulation with a Franka Emika Panda arm -- show that our method outperforms baseline MaxDiff \cite{berrueta2024maximum}, narrows the sim-to-real gap and enables zero-shot deployment of the policy.




\paragraph*{{\bf Related Work}}  the minimization of the free energy offers a unifying account across machine learning, robotics and neuroscience. See \cite{Prescott2023,Vijayaraghavan2025,lanillos2021activeinferenceroboticsartificial, Baioumy2021,10160593,pezzato2020novel} for surveys and results instantiating this framework into robots. Here, we focus on robustness in learning based control for robotics.

Adversarial and ensemble-based training strategies like RARL \cite{pinto2017robust} and EPOpt \cite{Rajeswaran2016EPOpt} expose policies to disturbances or challenging domains during training, enhancing tolerance to parametric spread. Domain randomization has been a key sim2real tactic in robotics, i.e., randomizing inertial and contact parameters for transfer \cite{peng2018sim,tobin2017domain,andrychowicz2020learning}. While effective in broadening training coverage, these methods typically optimize a \emph{nominal} objective at test time; robustness relies on the alignment between training distributions and deployment shifts, with no explicit control over deployment-time worst-case model misspecification. In contrast, we optimize a \emph{distributionally robust} objective at execution through per–state-action KL ambiguity, directly aligning decisions with epistemic risk. 

Robust MDPs and their distributionally robust variants optimize the worst-case value over uncertainty sets on transition kernels, yielding robust Bellman operators and bounds under bounded misspecification \cite{Iyengar2005,nilim2005robust,derman2020distributionalregularization}. Wasserstein and $f$-divergence sets (incl.\ KL) offer tractable inner maximizations and interpretable radii \cite{jesawada2025dr}. Our formulation instantiates \emph{per–state-action} KL balls shaped by the learned model’s uncertainty and solves the inner step via a free-energy (Donsker–Varadhan) reduction \cite{shafiei2025distributionally} to a scalar convex problem, enabling real-time planning. In contrast to most DR-MDP applications that target tabular or abstract benchmarks, we tailor ambiguity calibration and solver design to continuous-state/action robotics and couple it with a diffusion prior.

Max-entropy RL and control-as-inference (e.g., LMDP, PI$^2$, soft-actor methods) temper policies toward high entropy, offering stochastic smoothing and connections to risk sensitivity \cite{todorov_linearly_mdps}. These objectives regularize the policy distribution (or passive dynamics) \cite{gagliardi2020probabilistic}, but do not defend against adversarial shifts in the transition model. We borrow the same free-energy machinery but place the KL on next-state distributions (epistemic dynamics), yielding a robust soft policy whose conservatism adapts to model uncertainty.

Information-theoretic MPC/MPPI performs trajectory sampling with exponential cost weighting and has seen extensive robotic deployments (autonomous driving, quadrotors, manipulators), including GPU-accelerated real-time variants \cite{williams2017model,williams2018information}. Standard NN-MPPI, however, assumes a single predictive model plus temperature tuning; it does not enforce distributional robustness to transition misspecification. Our controller preserves MPPI’s practicality but embeds a principled inner robustification via per-step KL ambiguity.

Probabilistic ensembles (PETS) \cite{chua2018deep} and model-based policy optimization (MBPO) \cite{janner2019trust} achieve strong sample efficiency and competitive control on continuous-control and robotics benchmarks. Model-Ensemble Exploration and
Exploitation (MEEE) \cite{yao2021sample} tackles the problem of exploration and exploitation simultaneously by taking into account the future observation novelty and leveraging a model-ensemble. However, none of these methods provide any formal guarantees of robustness against adversarial uncertainty.

Maximum Diffusion RL \cite{berrueta2024maximum} enhances state-space coverage by amplifying diffusion and mitigating temporal correlations. MaxDiff outperforms MaxEnt RL \cite{eysenbach2021maximum}, NN-MPPI \cite{williams2018information}, and Soft-Actor Critic (SAC) \cite{haarnoja2018soft}. We incorporate a MaxDiff-style diffusion prior into our planner to ensure effective exploration in simulation, while KL ambiguity maintains robustness during execution, ensuring clear separation of concerns.

Robotics-focused RL demonstrations—SAC on real platforms, vision-in-the-loop manipulation, and sim2real locomotion \cite{levine2018learning,kumar2022adapting}—either rely on large real-world data, on-policy adaptation, or heavy randomization. In contrast, using a simulation-trained dynamics model, (i) we align control with epistemic risk via per-state KL ambiguity, (ii) preserve sample-based planning (MPPI/PETS), and (iii) couple exploration (MaxDiff) with free-energy DR robustness without deployment fine-tuning. We also show that KL-bounded dynamics robustness extends to a class of reward perturbations through an augmented-state free-energy bound.

\paragraph*{{\bf Contributions}}
we propose a modification to MaxDiff, combining it with the distributional free energy principle, DR-FREE. For our resulting computational model, we characterize policy robustness bounds and evaluate performance across both simulations, via OpenAI Gym \cite{brockman2016openai} and MuJoCo \cite{todorov2012mujoco}, and real hardware experiments with a Franka Emika Panda arm in a manipulation task. Experiments convincingly show that our framework outperforms standard MaxDiff baselines. Moreover, our model also proves useful for zero-shot deployments. In particular, we train a policy for a \textit{Franka Research~3} from a simulator, with the model available during training different from the real robotic arm. Then, after training, we deploy the trained policy on the real arm and show that, without any fine-tuning, the robot is able to complete a tabletop manipulation task. 

To the best of our knowledge, this work provides the first free energy computational model that, simultaneously, learns policies for continuous control tasks and provides explicit, a priori robustness guarantees that can be used to certify the use of the robot in field settings. Indeed: 1) in MaxDiff, and related literature, continuous policies can be learned without having access to the environment model and reward function. However, robustness only emerges a posteriori as a by-product of the entropy of the optimal policy; 2) In contrast, DR-FREE can guarantee robustness a priori. However, it requires a model and reward available to the agent.

Our framework combines the strengths of these approaches, enabling a priori robustness guarantees as in DR-FREE, while simultaneously learning policies without requiring access to models or rewards as in MaxDiff.

\section{Background}\label{sec:background}
After introducing the key notation and definitions, we formalize the optimization problems relevant to our work. Then, leveraging this problem, we introduce the key ingredients behind our computational model, MaxDiff and DR-FREE, motivating our approach to learning-based robust  control.
%
\subsection{Notation and Definitions}
We use calligraphic letters to denote sets e.g, $\mathcal{X}$ and boldface to denote vectors e.g, $\bv{x}$. A random variable is written as $\bv{V}$, with a particular realization denoted by $\bv{v}$. The \textit{probability density function} (pdf) of $\bv{v}$ is denoted by $p(\bv{v})$. Whenever integrals involving a pdf are considered, we implicitly assume that they exist. The expectation of a function $\mathbf{h}(\cdot)$ applied to $\bv{v}$ is defined as $\E_{p}[\mathbf{h}(\bv{v})] := \int \mathbf{h}(\bv{v}) p(\bv{v}) d\bv{v}$, where the integral is taken over the support of $p(\bv{v})$. The {conditional} pdf of $\bv{v}_1$ given $\bv{v}_2$ is written as $p(\bv{v}_1 \mid \bv{v}2)$. For countable sets, we use the notation $w{k_1:k_n}$, where $w_k$ represents a generic element, $k_1$ ($k_n$) denotes the index of the first (last) element, and $k_1:k_n$ represents the set of consecutive integers from $k_1$ through $k_n$. We denote by $\mathcal{D}$ the set of all probability density functions.

\begin{definition}
    The  KL divergence of $p(\bv{v})$ w.r.t $q(\bv{v})$ with $\mathrm{supp} p \subseteq \mathrm{supp} q$ is:
    \[\DKL{p}{q}:= \int_{v}p(\bv{v})\ln\frac{p(\bv{v})}{q(\bv{v})}d\bv{v}.\]
The KL divergence is a measure of the proximity of the pair of pdfs.  For discrete pmfs the integral can be substituted with a summation over $\bv{v}$.  
\end{definition}

Let $\mathcal{X}\subseteq \mathbb{K}^n$ and $\mathcal{U}\subseteq \mathbb{K}^p$ be the state and action spaces (with $\mathbb{K}\in\{\mathbb{R},\mathbb{Z}\}$), and let the decision horizon be $k=1,\dots,N$. At each $k$, we consider:
\begin{itemize}
  \item A \emph{trained (nominal) dynamics model} $\bar{p}_k(\bv{x}_{k}\mid \bv{x}_{k-1},\bv{u}_{k})$;
  \item A \emph{generative model} $\refplant{k}{k-1}$ for states and a \emph{generative prior} $\refpolicy{k}{k-1}$ for actions;
  \item the trajectory measures
\[
  p_{0:N} \!=\! p_0(\bv{x}_0)\!\!\prod_{k=1}^N\!\plant{k}{k-1}\,\pi_k(\bv{u}_{k}\mid \bv{x}_{k-1}),\]
  \[q_{0:N} \!=\! p_0(\bv{x}_0)\!\!\prod_{k=1}^N\! \refplant{k}{k-1}\,\refpolicy{k}{k-1}.
\]

\item stage costs $c^{(x)}_k:\mathcal{X}\!\to\!\mathbb{R}_{\ge0}$ and $c^{(u)}_k:\mathcal{U}\!\to\!\mathbb{R}_{\ge0}$.
\end{itemize}

\subsection{Free-Energy Minimization Problem}
Free-energy principle arises naturally in decision-making across 
information theory, active inference, learning, and control. In RL, 
they underpin schemes such as Maximum Entropy RL 
\cite{eysenbach2021maximum,berrueta2024maximum}, Bayesian inference 
\cite{lanillos2021activeinferenceroboticsartificial}, and 
entropy-based transport methods.

In our setting, the free-energy functional is
\begin{equation}\label{eqn:free-energy-functional}
    \sF(p_{0:N}) =
    \underbrace{\DKL{p_{0:N}}{q_{0:N}}}_{\text{complexity}} +
    \underbrace{\E_{p_{0:N}}\!\left[\sum_{k=1}^{N}
    c^{(x)}_{k}(\bv{x}_{k})+c^{(u)}_{k}(\bv{u}_{k})\right]}_{\text{expected cost}},
\end{equation}
where the first term measures divergence from a reference distribution 
$q_{0:N}$ and the second is expected cost. Notably, by 
defining a cost-shaped reference 
$\tilde{q}_{0:N} \propto q_{0:N}\exp(-\sum_k c^{(x)}_k+c^{(u)}_k)$, 
\eqref{eqn:free-energy-functional} reduces to 
$\sF(p_{0:N}) = \DKL{p_{0:N}}{\tilde{q}_{0:N}}$, 
shows that minimization of \eqref{eqn:free-energy-functional} is equivalent to KL projection onto a cost-weighted reference.


\subsection{Maximum Diffusion (MaxDiff) RL}

MaxDiff derives a path-distribution $P_{\max}$ that \emph{maximizes trajectory entropy} under continuity constraints, yielding Markov and ergodic sample paths and diffusion-like exploration. Writing the finite-horizon path entropy via the chain rule,
\[
S\!\bigl[P_{\max}[x_{1:N}]\bigr]
=\sum_{t=1}^N S\!\bigl[p_{\max}(\bv{x}_{t+1}\mid x_t)\bigr]
\;\propto\; \tfrac{1}{2}\sum_{t=1}^N \log\!\det C[x_t],
\]
where $C[x]$ is the local state-increment covariance; thus maximizing path entropy promotes large $\log\det C[x]$ along the trajectory.

A practical MaxDiff RL objective (SOC form) augments rewards with a diffusion bonus:
\begin{equation}\label{eqn:maxdiff-prob}
\begin{split}
    \pi^\star_{\mathrm{MaxDiff}}
\in\arg\max_{\pi}
\mathbb{E}_{(\bv{x}_{1:N},u_{1:N})\sim P_\pi}&\Bigl[
\sum_{k=1}^N r(\bv{x}_k,u_k)\\+\frac{\alpha}{2}\log\det C_\pi[x_k]
\Bigr],
\end{split}
\end{equation}
which is the objective used for the empirical results in \cite{berrueta2024maximum}. Equivalently, MaxDiff can be cast as minimizing
$D_{\mathrm{KL}}\!\bigl(P_\pi\Vert P^{r}_{\max}\bigr)$ between the agent’s path distribution and a maximally diffusive, reward-shaped reference $P^{r}_{\max}$, leading to an SOC with modified running reward or cost.

\subsection{Distributionally Robust Free-Energy Principle}

DR-FREE~\cite{shafiei2025distributionally} defines a sequential min–max problem over 
policies $\{\pi_k(\bv{u}_k\mid\bv{x}_{k-1})\}_{1:N}$ and environment models 
$\{p_k(\bv{x}_k\mid\bv{x}_{k-1},\bv{u}_k)\}_{1:N}$:
\begin{equation}\label{eqn:dr-free-problem}
  \min_{\{\pi_k\}}\;
  \max_{\{p_k\in\sB_\eta(\bar p_k)\}}
     \sF(p_{0:N}),
\end{equation}
subject to $p_k\!\in\!\sB_\eta(\bar p_k)$ for all $k=1,\dots,N$.
The ambiguity set is
\[
  \sB_{\eta}\!\bigl(\bar p_k\bigr)
  \;=\;
  \Bigl\{\,p_k\in\sD:\;
    D_{\mathrm{KL}}\!\bigl(p_{k}\,\Vert\,\bar{p}_{k}\bigr)
    \le \eta_k(\bv{x}_{k-1},\bv{u}_{k}),\;
  \Bigr\},
\]



where $\eta_k(\bv{x}_{k-1},\bv{u}_k)\ge 0$ bounds the statistical complexity 
relative to the trained model~$\bar{p}_k$, and the support constraint ensures 
the free-energy objective remains finite.

At each time-step~$k$, the resolution engine solves a stagewise bi-level problem:

\begin{equation}\label{eqn:stagewise}
\begin{multlined}[t]
  \min_{\pi_k}\;
  D_{\mathrm{KL}}\!\bigl(\pi_k\Vert q^{(u)}_{k}\bigr)
  + \mathbb{E}_{\pi_k}\!\Bigl[
    c^{(u)}_k(\bv{U}_k) \\
    +\!\max_{p_k\in\sB_\eta(\bar p_k)}\!
    \Bigl\{D_{\mathrm{KL}}\!\bigl(p_k\Vert q^{(x)}_{k}\bigr)
    +\mathbb{E}_{p_k}\!\bigl[\bar c_k(\bv{X}_k)\bigr]\Bigr\}
  \Bigr],
\end{multlined}
\end{equation}
where the cost-to-go is defined recursively as 
$\bar c_k(\bv{x}_k)=c^{(x)}_k(\bv{x}_k)+\hat c_{k+1}(\bv{x}_k)$, 
with $\hat c_{k+1}(\bv{x}_k)$ being the optimal value of~\eqref{eqn:stagewise} 
evaluated at $k\!+\!1$ (initialized at $\hat c_{N+1}\!=\!0$).
The inner maximization reduces to a scalar convex optimization problem 
whose non-negative optimal value yields the \emph{cost of ambiguity} 
$\eta_k(\bv{x}_{k-1},\bv{u}_k)+\tilde c(\bv{x}_{k-1},\bv{u}_k)$.

The optimal policy has an explicit Gibbs form:
\begin{equation}\label{eqn:Gibbs-policy}
\begin{split}
    &\pi_k^\star(u\mid \bv{x}_{k-1})
  \propto
  \\ &q_k(u\mid \bv{x}_{k-1})\,
  \exp\!\Bigl(-\,c^{(u)}_k(u)\;-\;\eta_k(\bv{x}_{k-1},\bv{u})\;-\;\tilde c(\bv{x}_{k-1},\bv{u})\Bigr),
\end{split}
\end{equation}
i.e., the generative prior $q_k(\bv{u}_k\mid\bv{x}_{k-1})$ is modulated by an 
exponential kernel containing the action cost and the cost of ambiguity. 
Actions associated with higher ambiguity are assigned lower probability, 
while as ambiguity vanishes, the policy recovers that of a standard 
free-energy minimizing agent.

\section{Modifying MaxDiff with DR-FREE}

First, we note that, given the reward $r=-(c^{(x)}+c^{(u)})$, one can define
\[
\tilde q_{0:N}
\;\propto\;
q_{0:N}\,\exp\!\Bigl(-\sum_{k=1}^N(c^{(x)}_k(\bv{X}_{k})+c^{(u)}_k(\bv{U}_{k}))\Bigr).
\]
Then, as noted in~\cite{shafiei2025distributionally}, when there is no ambiguity the DR-FREE objective can be recast as the minimization of the KL divergence between $p_{0:N}$ and $\tilde q_{0:N}$~\cite{garrabe2025convex}. In turn, this means that the optimization problem becomes the same problem tackled by MaxDiff when  the generative model is chosen as $\refplant{k}{k-1}=p_{\max}(\bv{x}_{k}\mid \bv{x}_{k-1})$ and a fixed action prior $\refpolicy{k}{k-1}$ (e.g., $\bar p(\bv{u}_{k}\mid \bv{x}_{k-1})$), so that the reference density becomes
$\propto \prod_k p_{\max}(\bv{x}_{k}\mid \bv{x}_{k-1}) \exp(\sum_k r(\bv{x}_{k},\bv{u}_{k}))$.



\subsection{Modifying MaxDiff: constructing $p_{\max}$ and using it as the state generative model}

\paragraph{Principle}
in DR-FREE, the complexity term $D_{\mathrm{KL}}\!\bigl(p_{0:N}\Vert q_{0:N}\bigr)$ biases the optimal policy toward a \emph{generative model} $q_{0:N}$ whose single-step factors are $q_k(\bv{x}_{k}\!\mid \bv{x}_{k-1},\bv{u}_{k})$ and $q_k(\bv{u}_{k}\!\mid \bv{x}_{k-1})$. To inject MaxDiff’s exploration into this bias, we \emph{choose the state generative kernel to be maximally diffusive}, i.e.
\[
q_k(\bv{x}_{k}\!\mid \bv{x}_{k-1},\bv{u}_{k})\;\equiv\;p_{\max}\bigl(\bv{x}_{k}\!\mid \bv{x}_{k-1},\bv{u}_{k}\bigr),
\]
while keeping any convenient prior $q_k(\bv{u}_{k}\!\mid \bv{x}_{k-1})$ for actions. This choice plugs MaxDiff’s path-entropy objective into DR-FREE via the complexity term, while the ambiguity machinery (inner maximization over $p_k$) and the soft optimal policy remain as in DR-FREE (scalar, convex inner step; Gibbs-form policy kernel).

\paragraph{What $p_{\max}$ means (MaxDiff)}
MaxDiff defines maximally diffusive Markov transitions by maximizing path entropy; for Gaussian increments this reduces to maximizing the sum of local entropies,
\[
S\!\bigl[P_{\max}[x_{1:N}]\bigr]
= \sum_{t=1}^{N} S\!\bigl[p_{\max}(\bv{x}_{t+1}\!\mid x_t)\bigr]
\;\propto\; \tfrac{1}{2}\sum_{t=1}^{N}\log\!\det C[x_t],
\]
which yields the implemented MaxDiff-RL objective
\(
\max_{\pi}\,\mathbb{E}\!\left[\sum_t r(\bv{x}_t,u_t) + \frac{\alpha}{2}\log\!\det C_\pi[x_t]\right]
\). In practice, when $C[x]$ can be low-rank/high-dimensional, one may use leading-eigenvalue or log-trace approximations for numerical stability (with the usual caveats).

\paragraph{Computing $p_{\max}$ for DR-FREE: a KL–MaxEnt construction}
for each $(\bv{x}_{k-1},\bv{u}_{k})$, the trained nominal dynamics are $\bar p_k(\cdot\!\mid \bv{x}_{k-1},\bv{u}_{k})$. We compute a \emph{single-step} maximally diffusive kernel by solving the \emph{maximum-entropy trust-region} problem
\begin{equation}
\label{eq:pmax_opt}
\begin{aligned}
p_{\max}(\cdot\!\mid \bv{x}_{k-1}, \bv{u}_{k})& \in \arg\max_{p\in\sD}\mathcal{H}\!\bigl(p\bigr)
\\&
\text{s.t.} D_{\mathrm{KL}}\!\bigl(p\Vert \bar p_k(\cdot\!\mid \bv{x}_{k-1},\bv{u}_{k})\bigr)\;\le\;\varepsilon_k(\bv{x}_{k-1},\bv{u}_{k}),\\&\quad 
\mathrm{supp}\,p\subseteq\mathrm{supp}\,\bar p_k.
\end{aligned}
\end{equation}
Lagrangian calculus gives $p_{\max}(\mathrm{d}x') \propto \bar p_k(\mathrm{d}x')^{\beta_k}$ with a unique $\beta_k\!\in\!(0,1]$ chosen to meet the KL budget; in exponential families this is equivalent to \emph{flattening} the nominal precision (increasing entropy) while staying within KL radius.

\medskip
\noindent\textbf{Gaussian closed form:} 
assume $\bar p_k(\cdot\!\mid \bv{x}_{k-1},\bv{u}_{k})=\mathcal{N}\!\bigl(\bar\mu_{k},\bar\Sigma_k\bigr)$ with $\bar \mu_{k}:=\bar f_k(\bv{x}_{k-1},\bv{u}_{k})$. Then
\[
p_{\max}(\cdot\!\mid \bv{x}_{k-1},\bv{u}_{k})
=\mathcal{N}\!\bigl(\bar\mu_{k},\;\Sigma_{\max,k}\bigr),
\,
\Sigma_{\max,k}\;=\;\lambda_k\,\bar\Sigma_k,\, \lambda_k>0,
\]
with \emph{scalar} $\lambda_k$ determined by the KL constraint. If we enforce $D_{\mathrm{KL}}\!\bigl(p_{\max}\Vert \bar p_k\bigr)=\varepsilon_k$, the unique solution satisfies
\begin{equation}\label{eq:lambda-from-eps}
\frac{n}{2}\,\bigl(\lambda_k-1-\log \lambda_k\bigr) \;=\; \varepsilon_k \quad(\lambda_k\ge 1),
\end{equation}
so $\Sigma_{\max,k}$ is a \emph{uniform inflation} of $\bar\Sigma_k$ that maximizes entropy, i.e., $\log\!\det\Sigma_{\max,k} = \log\!\det\bar\Sigma_k + n\log\lambda_k$ increases monotonically with $\lambda_k$—exactly aligned with MaxDiff’s $\frac{1}{2}\log\!\det(\cdot)$ objective. Equation \eqref{eq:lambda-from-eps} is a one-dimensional root find (bisection/Newton), strictly increasing in $\lambda_k$.


\paragraph{Plug-in to DR-FREE}
Once $p_{\max}$ is computed, set 
$q_k(\bv{x}_{k}\!\mid\bv{x}_{k-1},\bv{u}_k)\leftarrow 
 p_{\max}(\bv{x}_{k}\!\mid\bv{x}_{k-1},\bv{u}_k)$ 
and proceed with DR-FREE:
(i)~the \emph{inner} maximization of
$D_{\mathrm{KL}}(p_k\Vert q_k)
+\mathbb{E}_{p_k}[\bar c_k(\bv{X}_k)]$
over $p_k\!\in\!\sB_\eta(\bar p_k)$
yields the cost of ambiguity
$\eta(\bv{x}_{k-1},\bv{u}_k)+\tilde c(\bv{x}_{k-1},\bv{u}_k)$,
where $\tilde c(\bv{x}_{k-1},\bv{u}_k)=\min_{\alpha\ge0}\tilde V_\alpha(\bv{x}_{k-1},\bv{u}_k)$
is obtained via scalar convex optimization%
\footnote{Explicitly,
$\tilde V_\alpha
=\alpha\ln\mathbb{E}_{\bar p_k}\!
\bigl[\bigl(
  \bar p_k\exp{\bar c_k}/q_k
\bigr)^{1/\alpha}\bigr]
+\alpha\,\eta_k$
for $\alpha>0$; see~\cite{shafiei2025distributionally} 
for the boundary case $\alpha=0$.};
(ii)~the \emph{outer} minimization over $\pi_k$ returns
the Gibbs policy~\eqref{eqn:Gibbs-policy}.

\paragraph{Ambiguity-free limit}
when $\eta_k\!\to\!0$ the DR-FREE policy reduces to the free-energy minimizer that explicitly contains $D_{\mathrm{KL}}\!\bigl(\bar p_k\Vert q_k\bigr)$ and $\mathbb{E}_{\bar p_k}\![\bar c_k]$ in its exponent; with the MaxDiff plug-in $q_k=p_{\max}$ (Gaussian), this term is
\begin{equation*}
    D_{\mathrm{KL}}\!\bigl(\mathcal{N}(\bar\mu_{k},\bar\Sigma_k)\,\Vert\,\mathcal{N}(\bar\mu_{k},\lambda_k\bar\Sigma_k)\bigr)
=\frac{n}{2}\!\left(\frac{1}{\lambda_k}-1+\log\lambda_k\right),
\end{equation*}
which is a closed-form function of the $\lambda_k$ used to build $p_{\max}$.

\paragraph{When estimating through data}
when estimating a state-diffusion matrix $C[x]$ from data as in \cite{eysenbach2021maximum}, use the log-det objective or its stable approximations for large $n$. 
Any $q_k$ is admissible in DR-FREE (it can be a time-series model or encode goals); the plug-in $q_k=p_{\max}$ is therefore fully compatible with the framework.

\begin{figure}[t]
	\centering
	\def\svgwidth{0.96\linewidth}
	{\fontsize{10}{10}
		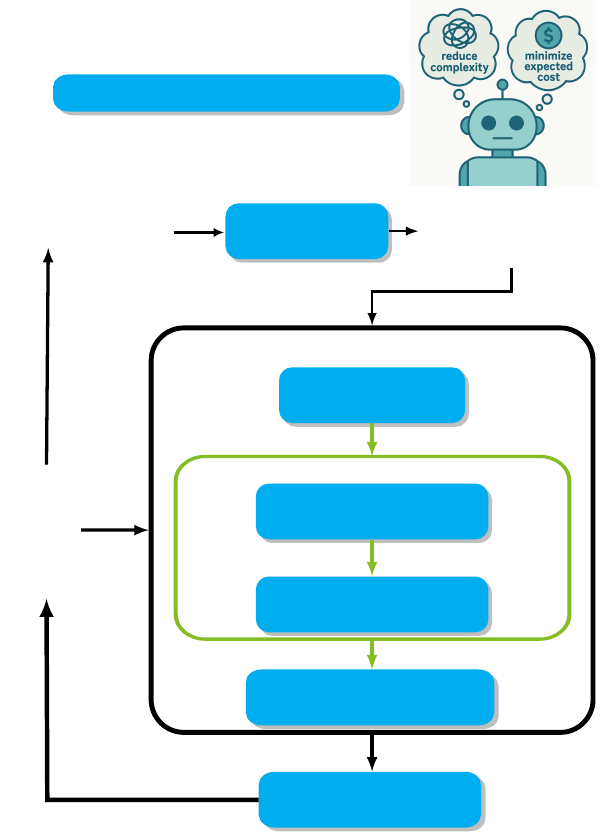}
    \caption{overview of the proposed control loop. 
At each step, the agent collects data into a replay buffer, updates 
the dynamics and cost models, computes the maximally diffusive kernel 
$p_{\max}$, and solves the min--max optimization. 
The resulting twisted kernel produces an action 
that is applied to the system, and the cycle repeats.} 
    \label{fig:control-scheme}
\end{figure}
\begin{algorithm}[!h]
\caption{Our Computational Model }
\begin{algorithmic}[1]
\Require Horizon $N$, ambiguity radius $\eta_k(\bv{x}_{k-1},\bv{u}_k)$, MaxDiff trust $\varepsilon_q$, 
action prior $q(\bv{u}_k|\bv{x}_{k-1})$.
\State Initialize the replay buffer $\mathbb{D}$, nominal dynamics model $\bar{p}(\bv{x_{k}}\mid\bv{x}_{k-1},\bv{u}_{k})$ and cost model $c_{k}(\bv{x}_{k},\bv{u}_{k})$ as neural networks.
\For{episode $e = 1,2,\dots,E$}
    \State Initialize state $\bv{x}_0$
    \For{time step $k = 0,1,\dots,N-1$}
        \State \textbf{Build $p_{max}$} using \eqref{eq:pmax_opt}--\eqref{eq:lambda-from-eps}.
        \State Sample random action from $\mathcal{U}$.
        \State \textbf{Solve} scalar convex optimization as in \eqref{eqn:c_tilde}.
        \State \textbf{Solve} outer minimization \eqref{eqn:dr-free-problem} and obtain optimal policy \eqref{eqn:Gibbs-policy}.
        \State \textbf{Execute action}: sample $\bv{u}_k \sim \pi^{*}_k(\cdot|\bv{X}_k)$, apply to system, collect $\{\bv{x}_{k}, c_k, \bv{x}_{k-1}, \bv{u}_{k}\}\rightarrow\mathbb{D}$.
    \EndFor
    \State \textbf{Update} $\bar{p}(\bv{x_{k}}\mid\bv{x}_{k-1},\bv{u}_{k})$ and $c_{k}(\bv{x}_{k},\bv{u}_{k})$ using $\mathbb{D}$ and backpropagation.
\EndFor
\end{algorithmic}
\end{algorithm}

The overall decision cycle is illustrated in Fig.~\ref{fig:control-scheme}, 
which complements Algorithm~1 by showing the interaction between replay buffer, model training, 
the computation of $p_{\max}$, the min--max optimization, and action execution.

\section{Joint Robustness to Dynamics and Cost Perturbations}\label{sec:augmented}

In this section, we extend our framework to account not only for dynamics misspecification but also for bounded perturbations in the stage cost. 
The idea of augmenting the system state with a running-cost variable, so that cost perturbations appear as additional uncertainty in the transition kernel, was briefly discussed in \cite{eysenbach2021maximum}. 
Our formal analysis, however, builds directly on the distributional free-energy results in \cite{shafiei2025distributionally}. 
Leveraging these results, we show that the policy obtained by solving \eqref{eqn:dr-free-problem} remains robust under stage cost perturbations. 
This extension allows the adversary’s KL budget to be allocated jointly across dynamics and cost channels, while preserving the tractable min--max structure and Gibbs-form policy of DR-FREE.

\subsection{Augmented State Formulation}

Let the nominal (stage) cost be
\[
\ell(\bv{x},\bv{u}) := c^{(x)}(\bv{x}) + c^{(u)}(\bv{u}).
\]
We reserve $\tilde c(\bv{x},\bv{u})$ exclusively for the \emph{ambiguity cost} returned by the
DR-FREE inner maximization. To account for perturbations
$\delta c(\bv{x},\bv{u})$ with $|\delta c|\le \Delta_k(\bv{x},\bv{u})$, we augment the observation
with the running cost $C_k$, so the augmented state is $(\bv{X}_k,C_k)$.

The corresponding transition kernels are
\begin{align*}
\bar p_k^{\mathrm{aug}}(\bv{x}',c'\mid \bv{x},C,\bv{u})
&=\bar p_k(\bv{x}'\mid \bv{x},\bv{u})\;\mathcal N\!\big(c'\,\big|\,C+\ell(\bv{x},\bv{u}),\,\sigma^2\big),
\\
p_k^{\mathrm{aug},\delta}(\bv{x}',c'\mid \bv{x},C,\bv{u})
&=p_k(\bv{x}'\mid \bv{x},\bv{u})\;\\
&\quad\mathcal N\!\big(c'\,\big|\,C+\ell(\bv{x},\bv{u})+\delta c(\bv{x},\bv{u}),\,\sigma^2\big),
\end{align*}
where $\bar p_k$ is the nominal next-state model and $p_k$ is an admissible (possibly misspecified)
next-state model. As in DR-FREE, we impose per–state-action KL balls:
\[
\DKL{p_k}{\bar p_k}\le \eta_k^{\mathrm{dyn}}(\bv{x},\bv{u}).
\]

\begin{theorem}\label{thm:joint-robustness}
Suppose the augmented ambiguity radii obey
\begin{equation}
\eta_k^{\mathrm{aug}}(\bv{x},\bv{u})\;\ge\; \eta_k^{\mathrm{dyn}}(\bv{x},\bv{u})\;+\;\frac{\Delta_k(\bv{x},\bv{u})^2}{2\sigma^2}
\quad\text{for all }(\bv{x},\bv{u}),
\label{eq:radius}
\end{equation}
Then, for each \(k\) and every \((\bv{x},\bv{u})\), the true augmented kernels
\(p^{\mathrm{aug},\delta}_k\) lie in the DR-FREE augmented ambiguity sets
\(\mathcal B_{\eta_k^{\mathrm{aug}}}(\bar p^{\mathrm{aug}}_k)\), and the saddle-point policy solving
\begin{equation}
\min_{\{\pi_k\}}\;\max_{\{p^{\mathrm{aug}}_k\in\mathcal B_{\eta_k^{\mathrm{aug}}}(\bar p^{\mathrm{aug}}_k)\}}\;
D_{\mathrm{KL}}\!\bigl(p^{\mathrm{aug}}_{0:N}\,\Vert\,\tilde q^{\mathrm{aug}}_{0:N}\bigr)
\label{eq:drfree-aug}
\end{equation}
is \emph{simultaneously robust} to both dynamics and cost perturbations, while preserving the DR-FREE
Gibbs-form policy.
\end{theorem}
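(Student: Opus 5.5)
The plan is to split the statement into a membership claim, which is a direct KL computation, and a robustness/structure claim, which follows by applying the DR-FREE machinery of \eqref{eqn:dr-free-problem}--\eqref{eqn:Gibbs-policy} verbatim to the augmented state. For membership, fix $k$ and $(\bv{x},C,\bv{u})$. Both $p^{\mathrm{aug},\delta}_k$ and $\bar p^{\mathrm{aug}}_k$ factor as (next-state kernel)$\times$(Gaussian cost kernel), and the cost factor does not depend on $\bv{x}'$. The chain rule for KL divergence therefore gives
\[
\DKL{p^{\mathrm{aug},\delta}_k}{\bar p^{\mathrm{aug}}_k}
=\DKL{p_k}{\bar p_k}
+\DKL{\mathcal N(C+\ell+\delta c,\sigma^2)}{\mathcal N(C+\ell,\sigma^2)} .
\]
The second term is the equal-variance Gaussian KL, namely $\delta c(\bv{x},\bv{u})^2/(2\sigma^2)\le \Delta_k(\bv{x},\bv{u})^2/(2\sigma^2)$. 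The first term is at most $\eta^{\mathrm{dyn}}_k(\bv{x},\bv{u})$ by assumption. Combining these with \eqref{eq:radius} gives $p^{\mathrm{aug},\delta}_k\in\mathcal B_{\eta^{\mathrm{aug}}_k}(\bar p^{\mathrm{aug}}_k)$. The support condition holds trivially, since the Gaussian factor has full support and $\mathrm{supp}\,p_k\subseteq\mathrm{supp}\,\bar p_k$ is implied by finiteness of the dynamics KL. Because the radius bound is uniform in the running-cost coordinate $C$, the same holds for every augmented state.

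For robustness, I would treat $(\bv{X}_k,C_k)$ as the state of a new DR-FREE problem \eqref{eq:drfree-aug}. The reference $\tilde q^{\mathrm{aug}}$ is built from $p_{\max}\times$(cost channel), and the stage costs are lifted to the augmented state. Applying the stagewise decomposition \eqref{eqn:stagewise} recursively backward from $\hat c_{N+1}=0$, the inner maximization at each stage is still over a KL ball around a fixed nominal kernel. It therefore reduces, exactly as before, to the scalar convex problem defining $\tilde c$, now evaluated with $\bar p^{\mathrm{aug}}_k$, $\eta^{\mathrm{aug}}_k$ and the augmented cost-to-go. The outer step then returns the Gibbs policy \eqref{eqn:Gibbs-policy} with $\eta_k$ replaced by $\eta^{\mathrm{aug}}_k$, so the Gibbs form is preserved.

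To obtain the robustness guarantee, I would use weak duality along the recursion. At each stage the policy's value is the maximum over the ambiguity set, so it upper-bounds the stage objective under any admissible kernel. By the membership step, the true perturbed kernels $p^{\mathrm{aug},\delta}_k$ are admissible. The saddle-point value is therefore an upper bound on the free energy $D_{\mathrm{KL}}(p^{\mathrm{aug}}_{0:N}\Vert\tilde q^{\mathrm{aug}}_{0:N})$ achieved by the optimal policy under the true joint dynamics/cost perturbation. This bound is the meaning I would give to ``simultaneously robust''.

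I expect the main obstacle to be making this last step rigorous. The horizon-level bound needs the stagewise maxima to compose, which means the adversary's per-stage choices must be able to realize the true kernel sequence, a rectangularity property. Per-(state, action) KL balls are rectangular, so I would first argue by backward induction on $k$ that the value satisfies $\hat c_k\ge$ the cost-to-go under any admissible kernel sequence. A second, smaller point is interpretation: the running-cost channel enters the objective only through the lifted cost, so I would state explicitly that the lifted stage cost reads the cost increment $c'-C$. With that convention, a cost perturbation $\delta c$ shows up as a kernel perturbation and is covered by the ambiguity set.
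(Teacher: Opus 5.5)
Your proposal is correct and follows essentially the same route as the paper's proof. Membership comes from the KL chain rule plus the equal-variance Gaussian KL $(\delta c)^2/(2\sigma^2)$. Feasibility of the true kernels then gives the worst-case upper bound for any fixed policy. Finally, the DR-FREE inner/outer reduction gives the scalar convex ambiguity cost and the Gibbs policy. The differences are in emphasis only. The paper writes out the two-channel Lagrangian explicitly, getting the running-cost dual term $\ell(\bv{x},\bv{u})+\sigma^2/(2\lambda_k)$ from the Gaussian moment generating function. You instead apply the DR-FREE reduction to the augmented kernel as a black box. In exchange, you make explicit the support condition and the rectangularity/backward-induction argument, which the paper only asserts as ``per-step feasibility composes.''
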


\begin{proof}
\textbf{Step 1 — KL decomposition and Gaussian bound.}
By the KL chain rule on \((\bv{X}_{k+1},C_{k+1})\),
\begin{align}
&D_{\mathrm{KL}}\!\bigl(p^{\mathrm{aug},\delta}_k \,\Vert\, \bar p^{\mathrm{aug}}_k\bigr)
= D_{\mathrm{KL}}\!\bigl(p_k(\cdot\mid \bv{x},\bv{u})\,\Vert\, \bar p_k(\cdot\mid \bv{x},\bv{u})\bigr) \notag\\
&\quad + \mathbb E_{p_k(\bv{x}'\mid \bv{x},\bv{u})}\!\Bigl[
D_{\mathrm{KL}}\!\bigl(\mathcal N(\cdot\mid C+\ell+\delta c,\sigma^2)\,\Vert\,\notag\\ &\quad\mathcal N(\cdot\mid C+\ell,\sigma^2)\bigr)
\Bigr]. \label{eq:kl-split-clean}
\end{align}
For Gaussians with equal variance, the second term equals \((\delta c)^2/(2\sigma^2)\le \Delta_k(\bv{x},\bv{u})^2/(2\sigma^2)\).
Using \(D_{\mathrm{KL}}(p_k\Vert \bar p_k)\le \eta_k^{\mathrm{dyn}}(\bv{x},\bv{u})\), we obtain
\[
D_{\mathrm{KL}}\!\bigl(p^{\mathrm{aug},\delta}_k \,\Vert\, \bar p^{\mathrm{aug}}_k\bigr)\;\le\;
\eta_k^{\mathrm{dyn}}(\bv{x},\bv{u})+\frac{\Delta_k(\bv{x},\bv{u})^2}{2\sigma^2}
= \eta_k^{\mathrm{aug}}(\bv{x},\bv{u}),
\]
so \(p^{\mathrm{aug},\delta}_k\in \mathcal B_{\eta_k^{\mathrm{aug}}}(\bar p^{\mathrm{aug}}_k)\) pointwise.

\textbf{Step 2 — Pathwise feasibility.}
Per-step feasibility composes along the horizon, hence
\(p^{\mathrm{aug},\delta}_{0:N}\in \prod_{k}\mathcal B_{\eta_k^{\mathrm{aug}}}(\bar p_k^{\mathrm{aug}})\).
For any fixed \(\{\pi_k\}\),
\[
D_{\mathrm{KL}}\!\bigl(p^{\mathrm{aug},\delta}_{0:N}\,\Vert\,\tilde q^{\mathrm{aug}}_{0:N}\bigr)
\ \le\
\sup_{\{p^{\mathrm{aug}}_k\in\mathcal B_{\eta_k^{\mathrm{aug}}}\}}
D_{\mathrm{KL}}\!\bigl(p^{\mathrm{aug}}_{0:N}\,\Vert\,\tilde q^{\mathrm{aug}}_{0:N}\bigr).
\]

\textbf{Step 3 — Inner maximization, Lagrangian, and ambiguity cost.}
At a fixed \((\bv{x},\bv{u})\), introduce the multiplier \(\lambda_k(\bv{x},\bv{u})\ge 0\) for the per-step KL constraint.
Writing \(p_k^{(x)}\) for the next-state component and \(p_k^{(r)}\) for the running-cost component,
the inner Lagrangian is
\begin{align*}
&\mathcal L_k\big(p_k^{(x)},p_k^{(r)},\lambda_k\big)
=\mathbb E_{p_k^{(x)}p_k^{(r)}}\!\big[R_k+V_{k+1}(\bv{X}_{k+1})\big]\\ &- \lambda_k D_{\mathrm{KL}}(p_k^{(x)}\Vert \bar p_k^{(x)}) -\lambda_k\mathbb E_{p_k^{(x)}} D_{\mathrm{KL}}\!\big(p_k^{(r)}\Vert \bar p_k^{(r)}\big)
+ \lambda_k\eta^{\text{aug}}_k(\bv{x},\bv{u}).
\end{align*}
Where, $R_{k}:=C_{k+1}-C_{k}$ and $V_{k+1}(\bv{X}_{k+1})\equiv\hat{c}_{k+1}(\bv{X}_{k+1})$. Maximizing over \(p_k^{(r)}\) (pointwise) gives the running-cost dual term
\[
\psi_k(\bv{x},\bv{u};\lambda_k)
= \lambda_k\log\mathbb E_{\bar p_k^{(r)}}\!\left[e^{R_k/\lambda_k}\right]
= \ell(\bv{x},\bv{u})+\frac{\sigma^2}{2\lambda_k}.
\]
Then, maximizing over \(p_k^{(x)}\) yields the DV dual
\[
\widetilde V_k(\bv{x},\bv{u};\lambda_k)
= \lambda_k\log \mathbb E_{\bar p_k^{(x)}(\cdot\mid \bv{x},\bv{u})}
\!\left[\exp\!\Big(\tfrac{V_{k+1}(\bv{X}_{k+1})+\psi_k(\bv{x},\bv{u};\lambda_k)}{\lambda_k}\Big)\right].
\]
Hence, the \emph{ambiguity cost} at \((\bv{x},\bv{u})\) is the scalar convex program
\begin{equation}\label{eqn:c_tilde}
    \tilde c(\bv{x},\bv{u})
= \min_{\lambda_k\ge 0}\Big\{\widetilde V_k(\bv{x},\bv{u};\lambda_k)+\lambda_k\,\eta^{\text{aug}}_k(\bv{x},\bv{u})\Big\}.
\end{equation}

\textbf{Step 4 — Policy form.}
Substituting the inner value into the outer step (with action prior and control cost) yields the DR-FREE Gibbs policy in \eqref{eqn:Gibbs-policy}.

\textbf{Step 5 — Conclusion.}
Steps~1–4 establish that the augmented radius \eqref{eq:radius} ensures robustness to joint dynamics
and cost perturbations, with the same implementable inner–outer structure as DR-FREE.
\end{proof}

\paragraph{Discussion.}
The result strictly extends DR-FREE: when \(\Delta_k\equiv 0\) (no cost perturbations),
\(\eta_k^{\mathrm{aug}}=\eta_k^{\mathrm{dyn}}\) and the formulation reduces to the original DR-FREE engine. When
\(\Delta_k>0\), the adversary’s KL budget can be allocated across dynamics and the cost channel, yet the
ambiguity cost still arises from a \emph{one-dimensional} convex minimization over \(\lambda_k\), preserving
tractability and the Gibbs policy form.

\begin{remark}
The resolution engine still yields the exponential policy kernel
$\refpolicy{k}{k-1}\exp\!\big(-c_k^{(u)}(\bv{u}_{k})-\eta_k(\bv{x}_{k-1},\bv{u}_{k})-\tilde c(\bv{x}_{k-1},\bv{u}_{k})\big)$,
whose normalization gives $\pi_k^\star(\bv{u}_{k}\mid \bv{x}_{k-1})$. The only change is that the adversary’s KL budget can now flow into the cost channel, which is precisely the claimed robustness.
\end{remark}

\section{Experiments}
\label{sec:experiments}

We evaluate the proposed controller in three settings with two sources of epistemic uncertainty: (i-ii) \emph{model learning error}, where a neural network approximates the true dynamics but inevitably incurs prediction errors (HalfCheetah, Franka simulation), and (iii) \emph{sim-to-real transfer}, where a simulation-trained model is deployed on physical hardware (Franka Research~3). 
All experiments use identical greedy controllers (horizon $N{=}1$) for both the proposed method and baselines, isolating the effect of the robustness mechanism. The reference control prior $q_k(u_k|x_k)$ is uniform over the action space.

We set $\eta_k(\bv{x}_{k-1},\bv{u}_k) = D_{\mathrm{KL}}\big(\mathcal{N}(\bv{x}^\star, \Sigma_\star) \,\|\, \bar{p}_k(\bv{x}_{k}|\bv{x}_{k-1},\bv{u}_k)\big),$
linking robustness to model uncertainty. For sensitivity analysis, we scale by coefficient $\varrho$: $\eta_k^{\text{eff}} = \varrho \cdot \eta_k(\bv{x}_{k-1},\bv{u}_k)$. The baseline $\varrho{=}0$ recovers MaxDiff, $\varrho{=}1$ is the default, $\varrho>1$ increases conservatism. For all the experiments, ambiguity arises from mismatch between the learned neural network dynamics model $\bar{p}_k(\bv{x}_{k}\!\mid \bv{x}_{k-1},\bv{u}_{k})$ and true dynamics, and also the Gaussian noise injected into the environment reward.

Given this setup, the resulting greedy policy takes the Gibbs form
\begin{equation}
  \pi^\star(\bv{u}_{k}\!\mid \bv{x}_{k-1})\ \propto\ \exp\!\big(-\eta(\bv{x}_{k-1},\bv{u}_{k})-\tilde{c}(\bv{x}_{k-1},\bv{u}_{k})\big),
  \qquad N=1.
  \label{eq:gibbs-policy}
\end{equation}

We report (i) episode \emph{return}, and, for manipulation tasks,
(ii) the \emph{minimum distance-to-goal} with a 5\,cm success threshold.
Figures referenced below show mean \(\pm\) one standard deviation across
seeds. Across all tasks the maximum episode length is set at $1000$ steps. For implementation details and code refer to: \hyperlink{https://tinyurl.com/2v9yc9dk}{https://tinyurl.com/2v9yc9dk}.

\subsection{HalfCheetah-v5 (MuJoCo)}
\label{subsec:hc}

\textbf{Task and priors.}
The desired state \(\bv{x}^\star\) encodes target forward velocity and an
approximately upright pose. The controller uses the greedy policy
\eqref{eq:gibbs-policy} at every step. \

\begin{figure}[h!]
  \centering
    \includegraphics[width=\columnwidth]{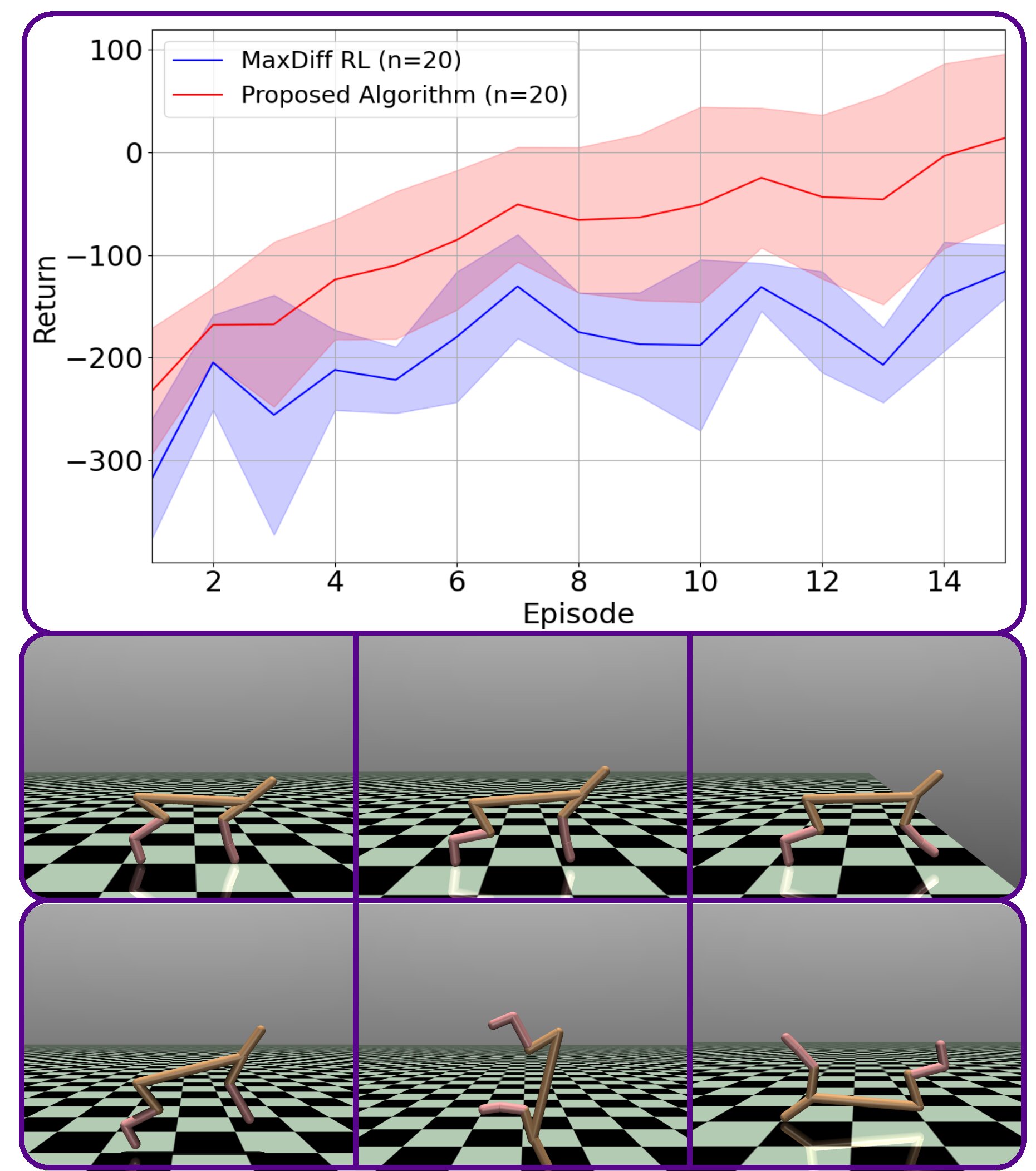}
 \caption{HalfCheetah results. Top: training performance of the proposed algorithm vs.\ MaxDiff RL on HalfCheetah-v5 (mean $\pm$1\,STD over $n=20$ runs). Middle: rollout frames from the proposed algorithm showing a stable stride reaching the target. Bottom: rollout frames from MaxDiff RL illustrating an unstable gait leading to failure.}
  \label{fig:hc-overview}\vspace{-0.5cm}
\end{figure}

\textbf{Results.}
Figure~\ref{fig:hc-overview}-top compares learning curves for DR-FREE and a
MaxDiff baseline. DR-FREE shows steady improvement in return with lower
variance early in training. Snapshots in
Fig.~\ref{fig:hc-overview}-middle and bottom panels illustrate qualitatively smoother, more
stable gaits relative to the higher-variance baseline. For the HalfCheetah benchmark, after training we performed 20 evaluation rollouts: our proposed method successfully reached the goal 18 times, while the MaxDiff baseline succeeded only 6 times.

\subsection{Franka Obstacle Task (simulation)}
\label{subsec:franka-sim}

\textbf{Environment.}
A tabletop manipulation scene with a vertical obstacle between the start
pose and the goal (grasp-and-place of a green block).



\begin{figure}[t]
	\centering
	\def\svgwidth{\linewidth}
	{\fontsize{8}{8}
		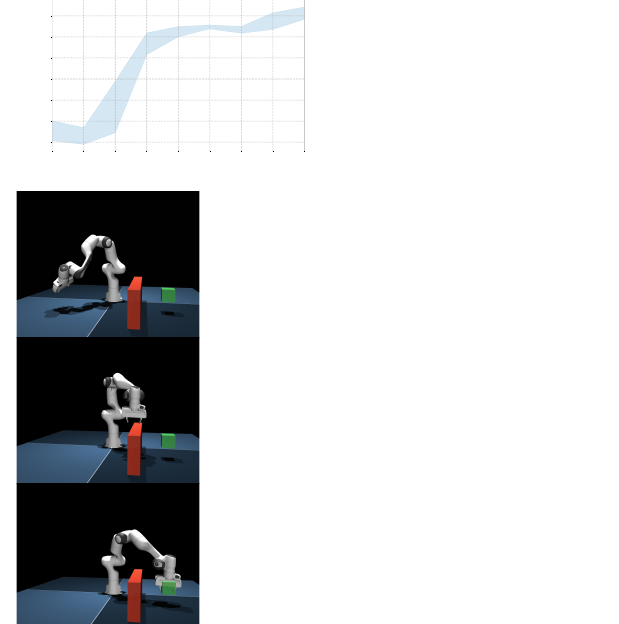}
\caption{Franka obstacle-avoidance results with the proposed method. Top: learning curves showing episode return and minimum distance to goal (success threshold 5\,cm, mean $\pm$1\,STD over $n=20$ runs). Bottom: i) a rollout demonstrating collision-free manipulation around the obstacle. ii) trajectory plot with goal orientation highlighted in cyan, magenta, and yellow, illustrating efficient training and task execution.}
  \label{fig:franka-obstacle-overview}
\end{figure}

\textbf{Results.}
Figure~\ref{fig:franka-obstacle-overview}-top shows returns increasing as
the minimum distance-to-goal falls below the 5\,cm threshold. The
snapshots in Fig.~\ref{fig:franka-obstacle-overview}b depict a learned,
collision-free path that adapts around the obstacle. The controller’s
cautious lateral adjustment emerges from the ambiguity cost, which is larger near contact-uncertain poses and thus
biases the Gibbs kernel toward safer actions.

\subsection{Franka Research~3 (real robot)}
\label{subsec:franka-real}

\textbf{Deployment.} 
We deploy the proposed algorithm on the real Franka Research~3 arm for a pick-and-place task in a cluttered tabletop scene, directly transferring the learned dynamics and executing the greedy controller without any task-specific tuning. The robot must reach a first goal position to grasp a green cube and then transport it to a second goal location. Two representative executions are shown in Fig.~\ref{fig:franka_pick_place}:
\begin{enumerate}
  \item \textbf{No-obstacle:} After grasping the cube at the first goal, the policy selects the optimal straight path across the table and successfully places the object at the second goal; see Fig.~\ref{fig:franka_pick_place}-top.
  \item \textbf{Obstacle-present:} When an obstacle is encountered en route, the policy autonomously plans a collision-free strategy by elevating the gripper above the obstruction before advancing toward and placing the object at the second goal; see Fig.~\ref{fig:franka_pick_place}-bottom.
\end{enumerate}
\begin{figure}
    \centering
    \includegraphics[width=0.48\textwidth]{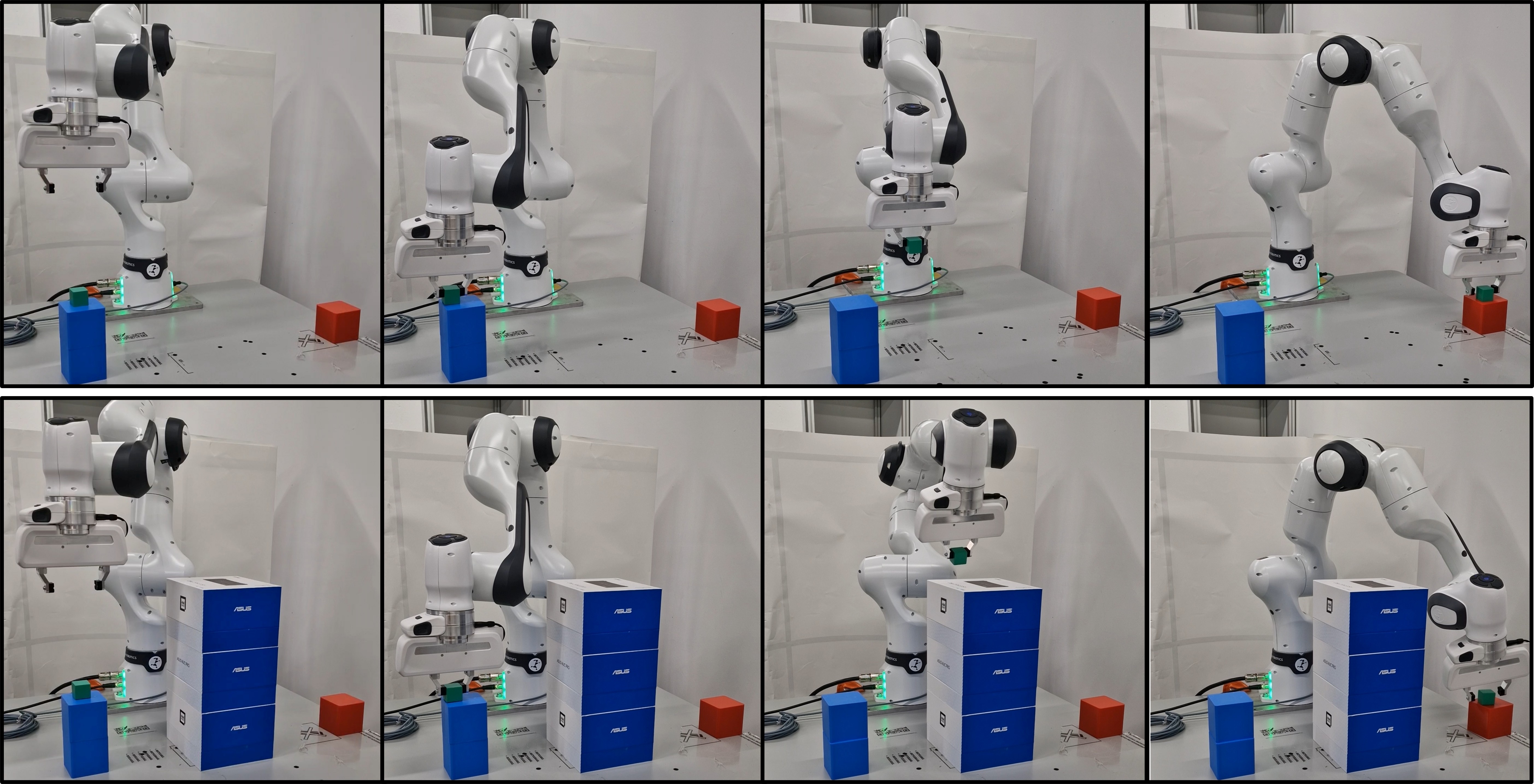}
    \caption{Deployment on the Franka Research~3 for cluttered tabletop pick-and-place. Top: grasp at the first goal and place at the second. Bottom: with an obstacle, the controller lifts to avoid collision and completes the placement.}
    \label{fig:franka_pick_place}
\end{figure}

\textbf{Interpretation.}  
 Qualitatively distinct, yet collision-free behaviors in these settings arise directly from the optimal policy in \eqref{eq:gibbs-policy}. With a non-informative action prior, adaptation is governed by the Gibbs exponent: the goal-centered state prior pulls the system toward the target state \(\bv{x}^\star\), while the obstacle cost increases in regions near obstacles. This modulation naturally leads the policy to take the direct path when unobstructed, or to switch to a lift-over maneuver when obstacles are present, ensuring safe task execution.

In Table~\ref{tab:eta_comparison}, we report a sensitivity analysis of the scaling coefficient $\varrho$ for obstacle-aware pick-and-place. As $\varrho$ increases, the learned policy becomes increasingly goal-seeking and less responsive to obstacles, leading to more frequent collisions.
\begin{table}[!t]
\centering
\begin{tabular}{
    p{2cm}  
    p{0.7cm} 
    p{0.4cm} 
    p{0.4cm} 
    p{0.4cm} 
    p{0.4cm} 
    p{0.4cm} 
    p{0.4cm} 
}
\toprule
Metric & $\varrho=0$ & $0.5$ & $1$ & $5$ & $100$ & $1000$ & $2000$ \\
\midrule
Normalized Avg. Episode Cost ($\downarrow$)
  & $1.87$              & $1.01$              & $\mathbf{1.00}$
  & $1.24$              & $9.50$              & $56.8$              & $148$ \\
Standard Deviation
  & $0.076$             & $0.014$             & $0.055$
  & $0.099$             & $7.96$              & $11.4$              & $28.3$ \\
Success Rate ($\uparrow$)
  & $100\%$ & $100\%$ & $100\%$ & $80\%$ & $60\%$ & $20\%$ & $0\%$ \\
\bottomrule
\end{tabular}
\caption{Performance across $\varrho$ over 5 trials. Episode cost is normalized to the best case ($\varrho{=}1$). A trial succeeds if the robot reaches the goal while staying $\geq 0.07$,m from all obstacle centers throughout the episode.}\vspace{-0.5cm}
\label{tab:eta_comparison}
\end{table}
\section{Conclusion}
Building on the distributionally robust free energy principle, we introduced a modification to the maximum diffusion framework that simultaneously enables policy learning for continuous robotics control tasks and provides a priori robustness guarantees. The resulting computational model enables robust exploration by combining maximum-diffusion state generation with per–state-action KL ambiguity sets, yielding a tractable min–max objective and a robust optimal policy. The same min–max structure extends to bounded stage-cost perturbations via an augmented-state formulation. Both numerical and hardware experiments illustrate that our policy computation model convincingly outperforms MaxDiff baselines in the HalfCheetah environment and enables zero-shot deployments on real hardware.

\bibliographystyle{IEEEtran}
\bibliography{references} 

\end{document}

%% file: my_commands.tex
\newcommand{\uSt}{\mathbf{u}^*_k}
\newcommand{\InScheme}{\{\mathbf{x}_k,c_k,\mathbf{x}_{k-1},\mathbf{u}_k\}}

\newcommand{\sB}{\mathcal{B}}

\newcommand{\sD}{\mathcal{D}}

\newcommand{\sF}{\mathcal{F}}

\newcommand{\E}{\mathbb{E}}

\newcommand{\bv}[1]{\mathbf{#1}}

\newcommand{\Vx}[2]{V_{\alpha}\left(\bv{x}_{k-1},\bv{u}_k\right)}
\newcommand{\Wx}[2]{W_{\alpha}\left(\bv{x}_{k-1},\bv{u}_k\right)}
\newcommand{\Vxtilde}[2]{\tilde{V}_{\alpha}\left(\bv{x}_{k-1},\bv{u}_k\right)}
\newcommand{\Mx}[2]{M\left(\bv{x}_{k-1},\bv{u}_k\right)}

\newcommand{\plant}[2]{p_{{#1}} \left(\bv{x}_{{#1}}\mid \bv{x}_{{#2}}, \bv{u}_{{#1}} \right)}

\newcommand{\refplant}[2]{q_{{#1}} \left(\bv{x}_{{#1}}\mid \bv{x}_{{#2}}, \bv{u}_{{#1}} \right)}

\newcommand{\refpolicy}[2]{q_{{#1}}\left(\bv{u}_{{#1}}\mid \bv{x}_{{#2}} \right)}

\newcommand{\KL}{\text{KL}}

\newcommand{\DKL}[2]{{D}_{\KL}\left(#1\mid \mid #2 \right)}

\newtheorem{theorem}{Theorem}[section]

\newtheorem{definition}{Definition}[section]

\newtheorem{remark}{Remark}[section]

\usepackage{dsfont}

\renewcommand{\thetable}{S-\arabic{table}}

%% file: figures/scheme.pdf_tex
\begingroup%
  \makeatletter%
  \providecommand\color[2][]{%
    \errmessage{(Inkscape) Color is used for the text in Inkscape, but the package 'color.sty' is not loaded}%
    \renewcommand\color[2][]{}%
  }%
  \providecommand\transparent[1]{%
    \errmessage{(Inkscape) Transparency is used (non-zero) for the text in Inkscape, but the package 'transparent.sty' is not loaded}%
    \renewcommand\transparent[1]{}%
  }%
  \providecommand\rotatebox[2]{#2}%
  \newcommand*\fsize{\dimexpr\f@size pt\relax}%
  \newcommand*\lineheight[1]{\fontsize{\fsize}{#1\fsize}\selectfont}%
  \ifx\svgwidth\undefined%
    \setlength{\unitlength}{291.55759796bp}%
    \ifx\svgscale\undefined%
      \relax%
    \else%
      \setlength{\unitlength}{\unitlength * \real{\svgscale}}%
    \fi%
  \else%
    \setlength{\unitlength}{\svgwidth}%
  \fi%
  \global\let\svgwidth\undefined%
  \global\let\svgscale\undefined%
  \makeatother%
  \begin{picture}(1,1.36699083)%
    \lineheight{1}%
    \setlength\tabcolsep{0pt}%
    \put(0,0){\includegraphics[width=\unitlength,page=1]{scheme.pdf}}%
    \put(0.37656607,1.2007341){\color[rgb]{1,1,1}\makebox(0,0)[t]{\lineheight{1.25}\smash{\begin{tabular}[t]{c}Overall Optimization Objective  \end{tabular}}}}%
    \put(0.50752264,0.97309506){\color[rgb]{1,1,1}\makebox(0,0)[t]{\lineheight{1.25}\smash{\begin{tabular}[t]{c}\textbf{Replay Buffer}\end{tabular}}}}%
    \put(0,0){\includegraphics[width=\unitlength,page=2]{scheme.pdf}}%
    \put(0.84770572,0.9786386){\color[rgb]{1,1,1}\makebox(0,0)[t]{\lineheight{1.25}\smash{\begin{tabular}[t]{c}\textbf{Dynamics \&}\end{tabular}}}}%
    \put(0.84647983,0.93113852){\color[rgb]{1,1,1}\makebox(0,0)[t]{\lineheight{1.25}\smash{\begin{tabular}[t]{c}\textbf{Cost Model}\end{tabular}}}}%
    \put(0.84180858,1.01448914){\color[rgb]{1,1,1}\makebox(0,0)[t]{\lineheight{1.25}\smash{\begin{tabular}[t]{c}\textbf{Training}\end{tabular}}}}%
    \put(0.61741445,0.70325598){\color[rgb]{1,1,1}\makebox(0,0)[t]{\lineheight{1.25}\smash{\begin{tabular}[t]{c}\textbf{Compute $p_{max}$}\end{tabular}}}}%
    \put(0.60952081,0.53299077){\color[rgb]{1,1,1}\makebox(0,0)[t]{\lineheight{1.25}\smash{\begin{tabular}[t]{c}\textbf{Solve Inner}\end{tabular}}}}%
    \put(0.61275262,0.49056635){\color[rgb]{1,1,1}\makebox(0,0)[t]{\lineheight{1.25}\smash{\begin{tabular}[t]{c}\textbf{Maximization}\end{tabular}}}}%
    \put(0.6109067,0.37875983){\color[rgb]{1,1,1}\makebox(0,0)[t]{\lineheight{1.25}\smash{\begin{tabular}[t]{c}\textbf{Solve Outer}\end{tabular}}}}%
    \put(0.61185125,0.33691005){\color[rgb]{1,1,1}\makebox(0,0)[t]{\lineheight{1.25}\smash{\begin{tabular}[t]{c}\textbf{Minimization}\end{tabular}}}}%
    \put(0.60968598,0.20722649){\color[rgb]{1,1,1}\makebox(0,0)[t]{\lineheight{1.25}\smash{\begin{tabular}[t]{c}\textbf{Twisted Kernel Policy}\end{tabular}}}}%
    \put(0.61138196,0.04233239){\color[rgb]{1,1,1}\makebox(0,0)[t]{\lineheight{1.25}\smash{\begin{tabular}[t]{c}\textbf{Sampled Action $\uSt$}\end{tabular}}}}%
    \put(0.61207451,0.78519299){\color[rgb]{0,0,0}\makebox(0,0)[t]{\lineheight{1.25}\smash{\begin{tabular}[t]{c}\textbf{Agent's Decision Cycle}\end{tabular}}}}%
    \put(0.61042061,0.58102197){\color[rgb]{0,0,0}\makebox(0,0)[t]{\lineheight{1.25}\smash{\begin{tabular}[t]{c}\textbf{Solve Min-Max Optimization}\end{tabular}}}}%
    \put(0.77514576,0.43904066){\color[rgb]{0,0,0}\makebox(0,0)[t]{\lineheight{1.25}\smash{\begin{tabular}[t]{c}Cost of Ambiguity\end{tabular}}}}%
    \put(0,0){\includegraphics[width=\unitlength,page=3]{scheme.pdf}}%
    \put(0.17556949,0.45720487){\color[rgb]{0,0,0}\makebox(0,0)[t]{\lineheight{1.25}\smash{\begin{tabular}[t]{c}$\mathbf{x}_{k-1}$\end{tabular}}}}%
    \put(0.1498167,0.97441486){\color[rgb]{0,0,0}\makebox(0,0)[t]{\lineheight{1.25}\smash{\begin{tabular}[t]{c}$\InScheme$\end{tabular}}}}%
  \end{picture}%
\endgroup%

%% file: figures/frankaSim.pdf_tex
\begingroup%
  \makeatletter%
  \providecommand\color[2][]{%
    \errmessage{(Inkscape) Color is used for the text in Inkscape, but the package 'color.sty' is not loaded}%
    \renewcommand\color[2][]{}%
  }%
  \providecommand\transparent[1]{%
    \errmessage{(Inkscape) Transparency is used (non-zero) for the text in Inkscape, but the package 'transparent.sty' is not loaded}%
    \renewcommand\transparent[1]{}%
  }%
  \providecommand\rotatebox[2]{#2}%
  \newcommand*\fsize{\dimexpr\f@size pt\relax}%
  \newcommand*\lineheight[1]{\fontsize{\fsize}{#1\fsize}\selectfont}%
  \ifx\svgwidth\undefined%
    \setlength{\unitlength}{296.93387009bp}%
    \ifx\svgscale\undefined%
      \relax%
    \else%
      \setlength{\unitlength}{\unitlength * \real{\svgscale}}%
    \fi%
  \else%
    \setlength{\unitlength}{\svgwidth}%
  \fi%
  \global\let\svgwidth\undefined%
  \global\let\svgscale\undefined%
  \makeatother%
  \begin{picture}(1,1.00798011)%
    \lineheight{1}%
    \setlength\tabcolsep{0pt}%
    \put(0,0){\includegraphics[width=\unitlength,page=1]{frankaSim.pdf}}%
    \put(0.04498227,0.97291852){\color[rgb]{0,0,0}\makebox(0,0)[t]{\lineheight{1.25}\smash{\begin{tabular}[t]{c}100\end{tabular}}}}%
    \put(0.05194809,0.90473233){\color[rgb]{0,0,0}\makebox(0,0)[t]{\lineheight{1.25}\smash{\begin{tabular}[t]{c}0\end{tabular}}}}%
    \put(0,0){\includegraphics[width=\unitlength,page=2]{frankaSim.pdf}}%
    \put(0.50583619,0.01011811){\color[rgb]{0,0,0}\makebox(0,0)[t]{\lineheight{1.25}\smash{\begin{tabular}[t]{c}-0.4\end{tabular}}}}%
    \put(0,0){\includegraphics[width=\unitlength,page=3]{frankaSim.pdf}}%
    \put(0.04303476,0.83607805){\color[rgb]{0,0,0}\makebox(0,0)[t]{\lineheight{1.25}\smash{\begin{tabular}[t]{c}-100\end{tabular}}}}%
    \put(0.04303476,0.77348017){\color[rgb]{0,0,0}\makebox(0,0)[t]{\lineheight{1.25}\smash{\begin{tabular}[t]{c}-200\end{tabular}}}}%
    \put(0.99715925,0.73080872){\color[rgb]{0,0,0}\makebox(0,0)[t]{\lineheight{1.25}\smash{\begin{tabular}[t]{c}9\end{tabular}}}}%
    \put(0.49215937,0.73080872){\color[rgb]{0,0,0}\makebox(0,0)[t]{\lineheight{1.25}\smash{\begin{tabular}[t]{c}9\end{tabular}}}}%
    \put(0.89522423,0.73082154){\color[rgb]{0,0,0}\makebox(0,0)[t]{\lineheight{1.25}\smash{\begin{tabular}[t]{c}7\end{tabular}}}}%
    \put(0.39026116,0.73082154){\color[rgb]{0,0,0}\makebox(0,0)[t]{\lineheight{1.25}\smash{\begin{tabular}[t]{c}7\end{tabular}}}}%
    \put(0.28830603,0.73134292){\color[rgb]{0,0,0}\makebox(0,0)[t]{\lineheight{1.25}\smash{\begin{tabular}[t]{c}5\end{tabular}}}}%
    \put(0.79330588,0.73134292){\color[rgb]{0,0,0}\makebox(0,0)[t]{\lineheight{1.25}\smash{\begin{tabular}[t]{c}5\end{tabular}}}}%
    \put(0.6915957,0.73079576){\color[rgb]{0,0,0}\makebox(0,0)[t]{\lineheight{1.25}\smash{\begin{tabular}[t]{c}3\end{tabular}}}}%
    \put(0.18659583,0.73079576){\color[rgb]{0,0,0}\makebox(0,0)[t]{\lineheight{1.25}\smash{\begin{tabular}[t]{c}3\end{tabular}}}}%
    \put(0.0836049,0.73080938){\color[rgb]{0,0,0}\makebox(0,0)[t]{\lineheight{1.25}\smash{\begin{tabular}[t]{c}1\end{tabular}}}}%
    \put(0.58945568,0.73072478){\color[rgb]{0,0,0}\makebox(0,0)[t]{\lineheight{1.25}\smash{\begin{tabular}[t]{c}1\end{tabular}}}}%
    \put(0.55675312,0.77807919){\color[rgb]{0,0,0}\makebox(0,0)[t]{\lineheight{1.25}\smash{\begin{tabular}[t]{c}0.0\end{tabular}}}}%
    \put(0.94239418,0.10154943){\color[rgb]{0,0,0}\makebox(0,0)[t]{\lineheight{1.25}\smash{\begin{tabular}[t]{c}0.0\end{tabular}}}}%
    \put(0.55675636,0.88514629){\color[rgb]{0,0,0}\makebox(0,0)[t]{\lineheight{1.25}\smash{\begin{tabular}[t]{c}0.2\end{tabular}}}}%
    \put(0.39222809,0.14771436){\color[rgb]{0,0,0}\makebox(0,0)[t]{\lineheight{1.25}\smash{\begin{tabular}[t]{c}0.2\end{tabular}}}}%
    \put(0.55674664,0.99214841){\color[rgb]{0,0,0}\makebox(0,0)[t]{\lineheight{1.25}\smash{\begin{tabular}[t]{c}0.4\end{tabular}}}}%
    \put(0.52565486,0.88631329){\color[rgb]{0,0,0}\rotatebox{90}{\makebox(0,0)[t]{\lineheight{1.25}\smash{\begin{tabular}[t]{c}Distance (m)\end{tabular}}}}}%
    \put(0.00887848,0.88517917){\color[rgb]{0,0,0}\rotatebox{90}{\makebox(0,0)[t]{\lineheight{1.25}\smash{\begin{tabular}[t]{c}Return\end{tabular}}}}}%
    \put(0.78818622,0.70607466){\color[rgb]{0,0,0}\makebox(0,0)[t]{\lineheight{1.25}\smash{\begin{tabular}[t]{c}Episode\end{tabular}}}}%
    \put(0.87800582,0.9120289){\color[rgb]{0,0,0}\makebox(0,0)[t]{\lineheight{1.25}\smash{\begin{tabular}[t]{c}Sucess Threshold\end{tabular}}}}%
    \put(0.28594728,0.70641457){\color[rgb]{0,0,0}\makebox(0,0)[t]{\lineheight{1.25}\smash{\begin{tabular}[t]{c}Episode\end{tabular}}}}%
    \put(0.86388325,0.98127147){\color[rgb]{0,0,0}\makebox(0,0)[t]{\lineheight{1.25}\smash{\begin{tabular}[t]{c}Mean Distance\end{tabular}}}}%
    \put(0.81049684,0.9465218){\color[rgb]{0,0,0}\makebox(0,0)[t]{\lineheight{1.25}\smash{\begin{tabular}[t]{c}$\pm$ STD\end{tabular}}}}%
    \put(0.75173942,0.04287934){\color[rgb]{0,0,0}\makebox(0,0)[t]{\lineheight{1.25}\smash{\begin{tabular}[t]{c}0.2\end{tabular}}}}%
    \put(0.94615383,0.27896502){\color[rgb]{0,0,0}\makebox(0,0)[t]{\lineheight{1.25}\smash{\begin{tabular}[t]{c}0.2\end{tabular}}}}%
    \put(0.95206963,0.48613659){\color[rgb]{0,0,0}\makebox(0,0)[t]{\lineheight{1.25}\smash{\begin{tabular}[t]{c}0.4\end{tabular}}}}%
    \put(0.83557373,0.05334714){\color[rgb]{0,0,0}\makebox(0,0)[t]{\lineheight{1.25}\smash{\begin{tabular}[t]{c}0.4\end{tabular}}}}%
    \put(0.6696456,0.03276041){\color[rgb]{0,0,0}\makebox(0,0)[t]{\lineheight{1.25}\smash{\begin{tabular}[t]{c}0.0\end{tabular}}}}%
    \put(0.36305772,0.21576798){\color[rgb]{0,0,0}\makebox(0,0)[t]{\lineheight{1.25}\smash{\begin{tabular}[t]{c}-0.1\end{tabular}}}}%
    \put(0.59311406,0.01975117){\color[rgb]{0,0,0}\makebox(0,0)[t]{\lineheight{1.25}\smash{\begin{tabular}[t]{c}-0.2\end{tabular}}}}%
    \put(0.91457483,0.06252237){\color[rgb]{0,0,0}\makebox(0,0)[t]{\lineheight{1.25}\smash{\begin{tabular}[t]{c}0.6\end{tabular}}}}%
    \put(0.43651644,0.08280006){\color[rgb]{0,0,0}\makebox(0,0)[t]{\lineheight{1.25}\smash{\begin{tabular}[t]{c}0.5\end{tabular}}}}%
    \put(0.36243156,0.09948388){\color[rgb]{0,0,0}\rotatebox{-66.195776}{\makebox(0,0)[t]{\lineheight{1.25}\smash{\begin{tabular}[t]{c}X (m)\end{tabular}}}}}%
    \put(0.74093175,0.00789164){\color[rgb]{0,0,0}\rotatebox{8.2031348}{\makebox(0,0)[t]{\lineheight{1.25}\smash{\begin{tabular}[t]{c}Y (m)\end{tabular}}}}}%
    \put(0.9935184,0.28712211){\color[rgb]{0,0,0}\rotatebox{90}{\makebox(0,0)[t]{\lineheight{1.25}\smash{\begin{tabular}[t]{c}Z (m)\end{tabular}}}}}%
    \put(0.61702955,0.65748438){\color[rgb]{0,0,0}\makebox(0,0)[t]{\lineheight{1.25}\smash{\begin{tabular}[t]{c}Start\end{tabular}}}}%
    \put(0.61435286,0.61930517){\color[rgb]{0,0,0}\makebox(0,0)[t]{\lineheight{1.25}\smash{\begin{tabular}[t]{c}End\end{tabular}}}}%
    \put(0.61680318,0.58099792){\color[rgb]{0,0,0}\makebox(0,0)[t]{\lineheight{1.25}\smash{\begin{tabular}[t]{c}Goal\end{tabular}}}}%
    \put(0.3897837,0.81838808){\color[rgb]{0,0,0}\makebox(0,0)[t]{\lineheight{1.25}\smash{\begin{tabular}[t]{c}Mean Return\end{tabular}}}}%
    \put(0.36148252,0.78341574){\color[rgb]{0,0,0}\makebox(0,0)[t]{\lineheight{1.25}\smash{\begin{tabular}[t]{c}$\pm1$ STD\end{tabular}}}}%
    \put(0,0){\includegraphics[width=\unitlength,page=4]{frankaSim.pdf}}%
  \end{picture}%
\endgroup%